\newcolumntype{x}[1]{>{\centering\arraybackslash}p{#1}}
\newtheorem{theorem}{Theorem}
\newtheorem*{theorem*}{Theorem}
\newtheorem{lemma}{Lemma}
\newtheorem*{lemma*}{Lemma}
\title{ \hspace{0.25cm}  Exploring the Limits of Deep Image \\ \hspace{0.25cm}  Clustering using Pretrained Models}
\def\thefootnote{*}\footnotetext{Equal contribution.}
\begin{document}

\maketitle

\begin{abstract}
We present a general methodology that learns to classify images without labels by leveraging pretrained feature extractors. Our approach involves self-distillation training of clustering heads, based on the fact that nearest neighbours in the pretrained feature space are likely to share the same label. We propose a novel objective that learns associations between image features by introducing a variant of pointwise mutual information together with instance weighting. We demonstrate that the proposed objective is able to attenuate the effect of false positive pairs while efficiently exploiting the structure in the pretrained feature space. As a result, we improve the clustering accuracy over $k$-means on $17$ different pretrained models by $6.1$\% and $12.2$\% on ImageNet and CIFAR100, respectively. Finally, using self-supervised vision transformers we achieve a clustering accuracy of $61.6$\% on ImageNet. The code is available at \url{https://github.com/HHU-MMBS/TEMI-official-BMVC2023}.
\end{abstract}

\section{Introduction}
\label{intro}
Given a plethora of publicly available pretrained vision models, we ask the following questions: a) how well-structured is the feature space of pretrained architectures with respect to label-related information, and b) how to best adapt this structure to unsupervised tasks. To answer these questions, we focus on unsupervised image classification, also known as image clustering. Image clustering is the task of assigning a semantic label to an image, given an a priori finite set of classes. Ultimately, image clustering consists of simultaneously learning the relevant representations and the cluster assignments.

To begin addressing the aforementioned questions, we present the key challenges regarding image clustering. First, given that we can roughly estimate the number of ground-truth labels, the underlying distribution among classes is hard to infer from the data, which is typically assumed to be uniform. Second, the images should ideally be classified consistently (images of the same class are grouped together) and confidently (one-hot prediction probability).
\newpage
Consistency can be achieved by either learning features that are invariant under transformations of the same image (e.g.\ cropping, colour jitter, etc.) or invariant w.r.t.\ to substitution by other images that belong to the same semantic class.
Consequently, clustering methods are generally prone to degenerate solutions \cite{sscn}. In other words, samples tend to collapse into a single cluster, or the prediction probability spreads out uniformly.

\begin{figure*}[t]
\begin{center}
\includegraphics[width=1.0\textwidth]{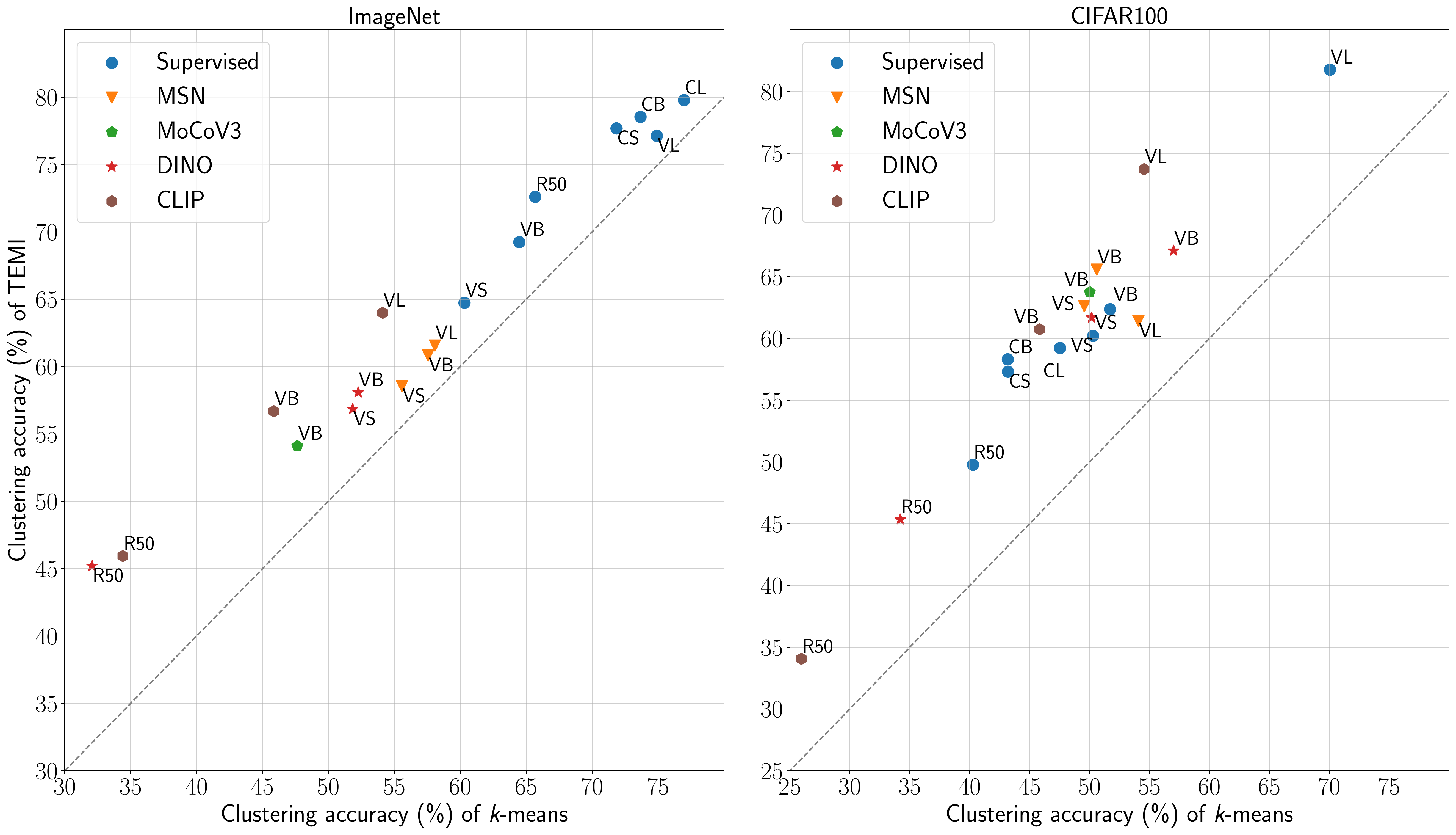}
\end{center}
\caption{
\textbf{Clustering accuracies on ImageNet (left) and CIFAR100 (right) across 17 pretrained models.} Supervised and self-supervised models (MSN, MoCoV3, DINO) were pretrained on ImageNet. R50 stands for ResNet50 \cite{resnet}, C for ConvNext \cite{convnext}, and V for Vision Transformer \cite{vit}. Small (S), Base (B), and Large (L) indicate the size of the models. The vertical distance of each data point to the diagonal (dashed line) shows the improvement of our method (TEMI) over $k$-means. Best viewed in color.} 
\label{fig:main}
\end{figure*}

It is well-established that representation learning plays a critical role in image clustering \cite{chang2017deep}, which is achieved with self-supervised learning \cite{simclr, moco,zbontar2021barlow,byol}. Recent studies have demonstrated that self-supervised features are typically more transferable to new tasks than features from supervised learning \cite{ericsson2021ssltranfer}. The frequently used joint-embedding architectures \cite{byol,dino} are by design invariant to strong image transformations that preserve label-related information. That renders these architectures as promising candidates for image clustering \cite{scan}, which has not been thoroughly explored at scale \cite{tsp}. Even though self-supervision \cite{ericsson2021ssltranfer} and vision transformers (ViTs) \cite{naseer2021intriguing} have been separately established for representation learning, limited research has been conducted to study self-supervised ViTs or vision-language models (i.e.\ CLIP \cite{radford2021clip}).

How to adapt a pretrained model for image clustering is non-trivial. For instance, it is well known that $k$-means is sub-optimal, as it often leads to imbalanced clusters \cite{scan} since it is primarily suitable for evenly scattered data samples around their centroids \cite{yang2017towards}. On the other hand, deep image clustering methods normally rely on pairs by mining the nearest neighbours (NN) based on their feature similarity \cite{dwibedi2021little,huang2019and}. Still, images that are close in the feature space do not always share the same semantic class \cite{scan} and, therefore, must be considered as noisy pairs. 

In this paper, a two-stage method that extends the existing multi-stage clustering approaches is proposed. In contrast to \cite{scan}, where features are learned from scratch for each dataset, we show that multi-stage clustering approaches can leverage pretrained models trained on larger-scale datasets (Fig.\ 1) and focus on learning the cluster assignments. To this end, a self-distillation clustering framework is introduced using a novel objective based on pointwise mutual information and instance weighting. Second, a comprehensive experimental study across models and datasets is conducted. Therein, we report an average gain of $6.1$\% and $12.2$\% in clustering accuracy compared to $k$-means on ImageNet and CIFAR100 across $17$ pretrained models, as illustrated in \Cref{fig:main}. Overall, we show that ViTs capture the most transferable label-related features. Finally, we find that self-supervised ViTs \cite{msn} achieve state-of-the-art results ($61.6$\% clustering accuracy) on ImageNet without using the ground-truth labels or external data.

\section{Related Work}
\label{related}
\noindent\textbf{Single-stage Deep Image Clustering Methods.} Deep image clustering approaches can be roughly divided into single and multi-stage methods. The majority of single-stage methods alternate between learning the features and the clusters, i.e.\ in an expectation-maximization manner. For instance, in DAC, \cite{chang2017deep} formulate a binary pairwise-classification task, where at each iteration, pairs are selected based on their feature similarity. Next, the computed pairs are used to train a convolutional neural network (CNN). In the same direction, in DeepCluster \cite{deepcluster}, the authors alternate between clustering the features of a CNN with $k$-means \cite{lloyd1982kmeans} and using the obtained cluster assignments as pseudo-labels to optimize the parameters of the CNN. 

Later on, in \cite{sela}, the authors demonstrate that DeepCluster is prone to degenerate solutions that are avoided via particular hyperparameter choices. To that end, the authors design a multi-step pseudo-label extraction framework called SeLa. The latter iteratively estimates the pseudo-label assignment matrix under the equipartition constraint. In PCL \cite{pcl}, the authors formulate clustering as learning the cluster centroids with $k$-means in parallel with optimizing the network via contrastive learning \cite{simclr}. To overcome the class collision of the negative pairs, \cite{propos} extend PCL in a proximal framework called ProPos. ProPos only maximizes the distance between the cluster centroids with contrastive learning while mining NN for neighbouring sample alignment \cite{byol}. However, most of the existing approaches still rely on $k$-means for estimating the clusters (pseudo-labels).

Several single-stage approaches exist, which aim to learn the feature representations and clusters jointly. Single-step methods are known to be sensitive to weight initialization \cite{dang2021nearest}. In this direction, DCMM is developed \cite{wu2019deep} to progressively mine NN in the feature space as well as high-confident samples. Another single-stage end-to-end example is IIC, wherein \cite{ji2019invariant} derives a mutual information-based objective for paired data to train a CNN. Nevertheless, the aforementioned approaches only consider stochastic transforms of the same image to obtain a pair. They are hence limited to solely learning invariances w.r.t.\ image augmentations, which cannot cover the variability of a given class \cite{dwibedi2021little}. More recently, \cite{sscn} presented a single-stage end-to-end method, called SSCN, that employs a variant of the cross-entropy loss.

\noindent\textbf{Multi-stage Deep Image Clustering Methods}.
Multi-stage methods initially design a pretext task in order to learn semantically meaningful features, such as denoising autoencoders \cite{xie2016unsupervised}. A major breakthrough in deep image clustering is established by the adoption of contrastive learning \cite{simclr,moco}. For instance, \cite{scan} decouples image clustering into three distinct steps, starting with contrastive learning. Subsequently, the authors train a head to cluster the mined NN from the extracted features. Lastly, they use the pseudo-labels from the confidently assigned samples to fine-tune the whole architecture. A similar approach, called NNM \cite{dang2021nearest}, aims to mine NN from the batch and dataset features. 

Recently, \cite{tsp} leverage self-supervised pretrained ViTs \cite{dino} and train a clustering head, which is closer to our method. Nevertheless, their approach (TSP) heavily relies on $k$-means for the weight initialization phase. Surprisingly, very few image clustering approaches \cite{scan,sscn,sela} have been successfully applied on ImageNet \cite{deng2009imagenet}. Besides, most methods report results only with Resnet50 \cite{resnet}, while superior architectures for image recognition remain unexplored \cite{convnext,vit}.

\section{Proposed Method}
\label{method}

\newcommand{\pmi}{\operatorname{pmi}}
\newcommand{\pmib}{\pmi^\beta}
\newcommand{\bpmi}{\pmi^\beta}
\newcommand{\KL}[2]{\operatorname{KL}(#1 \parallel #2)}
\newcommand{\MI}[2]{\operatorname{I}(#1;#2)}
\newcommand{\Entropy}{\operatorname{H}}
\newcommand{\Expected}[2]{\operatorname{\mathbb{E}}_{#1} [#2]}
\newcommand{\pthet}{q}
\newcommand{\popt}{\pthet^*}
\newcommand{\pstud}{q_s}
\newcommand{\pteach}{q_t}
\newcommand{\probEma}{\operatorname{EMA}}

\subsection{Classification Model}
\label{subsec:mi-objective}
Our aim is to learn a probabilistic classifier from pairs of examples that share label-related information. 
We assume that the data distribution, $p(x)$, is the result of a generative process, $c \sim p(c)$ and $x \sim p(x\vert c)$, with $p(c)$ the prior probability that an example belongs to a class $c \in \{1,..,C\}$. Consequently, the joint distribution, $p(x,x')$ that a pair of examples, $(x,x')$, belongs to the same class is given by $ p(x,x') = \sum_{c=1}^C p(x|c)p(x'|c)p(c).$
We introduce a parametrized probabilistic classifier, $q(c\vert x)$, that distributes examples $x\sim p(x)$ among classes, with class occupancy given by $q(c)=\Expected{x \sim p(x)}{q(c|x)}$. Using Bayes' theorem $q(x|c) = q(c|x)p(x) / q(c)$, the joint distribution, $p(x,x')$, can be approximated by 
\begin{equation}
    \label{eq:est-bottleneck}
    q(x,x') = \sum_{c=1}^C q(x|c)q(x'|c)q(c).
\end{equation}
To estimate the association between $x$ and $x'$ we introduce the pointwise mutual information, $\pmi(x,x')$ \cite{church1990word}, defined by
\begin{equation}
   \label{eq:pmi-definition}
\pmi(x,x') \coloneqq \log \frac{q(x,x')}{p(x)p(x')} \\
            = \log \sum_{c=1}^C \frac{q(c|x) q(c|x')}{q(c)}. 
\end{equation}
 
\begin{theorem}
\label{thm:optimal}
    If (i) each example $x\sim p(x)$ belongs to one and only one cluster under the generative model $p(x)= \sum_c p(x|c)p(c)$, (ii) the joint distribution $p(x,x')$ is known, and (iii) $\popt(c|x)$ is a probabilistic classifier defined by
    \begin{equation}
    \label{eq:thm-objective}
    \popt(c|x) = \arg\max_{q(c|x)} \Expected{x,x' \sim p(x,x')}{\pmi(x,x')},
    \end{equation}
    then $\popt(c|x)$ is equal to the optimal probabilistic classifier, $p(c|x)=p(x|c)p(c)/p(x)$, up to a permutation of cluster indices.
\end{theorem}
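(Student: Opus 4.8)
The plan is to recognise the objective in \eqref{eq:thm-objective} as a mutual information minus a Kullback--Leibler divergence. First I would note that $q(x,x')$ as defined in \eqref{eq:est-bottleneck} is itself a genuine joint density (it integrates to one, since $\int q(x\vert c)\,dx = 1$ for every $c$ and $\sum_c q(c) = 1$), and that by \eqref{eq:pmi-definition} we have $\pmi(x,x') = \log q(x,x') - \log p(x) - \log p(x')$. Taking the expectation under $p(x,x')$ and adding and subtracting $\log p(x,x')$ gives
\begin{equation*}
\Expected{x,x'\sim p(x,x')}{\pmi(x,x')} = \MI{x}{x'} - \KL{p(x,x')}{q(x,x')},
\end{equation*}
where $\MI{x}{x'}$ is computed under the true $p(x,x')$ and does not depend on $q$. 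Hence maximising over $q(c\vert x)$ is equivalent to minimising $\KL{p(x,x')}{q(x,x')}$, whose minimum value $0$ is attained exactly when $q(x,x') = p(x,x')$ almost everywhere, by Gibbs' inequality.

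Next I would verify that this minimum is reachable inside our parametrisation: setting $q(c\vert x) = p(c\vert x)$ yields $q(c) = \Expected{x\sim p(x)}{p(c\vert x)} = p(c)$, hence by Bayes $q(x\vert c) = p(c\vert x)p(x)/p(c) = p(x\vert c)$, and therefore $q(x,x') = \sum_c p(x\vert c)p(x'\vert c)p(c) = p(x,x')$. Consequently the set of maximisers of \eqref{eq:thm-objective} is precisely the set of classifiers $q(c\vert x)$ with $q(x,x') = p(x,x')$ a.e., and it remains to show that this set is a single orbit of $p(c\vert x)$ under permutations of the cluster index.

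For the identifiability step I would use hypothesis (i): it means the densities $p(\cdot\vert c)$ have essentially disjoint supports $X_c$, so $p(c\vert x) = \mathbf{1}[x\in X_c]$. Fix a maximiser $q$. For $x\in X_a$ and $x'\in X_b$ with $a\ne b$ we have $p(x,x')=0$, hence $q(x,x') = p(x)p(x')\sum_c q(c\vert x)q(c\vert x')/q(c) = 0$; as every summand is non-negative this forces $\operatorname{supp}q(\cdot\vert x)\cap\operatorname{supp}q(\cdot\vert x') = \emptyset$. Therefore the sets $S_a := \bigcup_{x\in X_a}\operatorname{supp}q(\cdot\vert x)$, $a=1,\dots,C$, are pairwise disjoint subsets of $\{1,\dots,C\}$, and each is non-empty since $\sum_c q(c\vert x) = 1$. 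A counting argument ($C$ non-empty pairwise disjoint subsets partitioning a $C$-element set) shows each $S_a$ is a singleton, say $S_a = \{\pi(a)\}$ with $\pi$ a permutation; in particular this forces $q(c) > 0$ for all $c$, so the $0/0$ ambiguity never arises at a maximiser. But $S_a = \{\pi(a)\}$ means $q(\pi(a)\vert x) = 1$ for a.e.\ $x\in X_a$, i.e.\ $q(c\vert x) = \mathbf{1}[x\in X_{\pi^{-1}(c)}] = p(\pi^{-1}(c)\vert x)$, which is exactly $p(c\vert x)$ up to the permutation $\sigma = \pi^{-1}$.

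I expect the main obstacle to be this last identifiability step rather than the variational reformulation, which is routine. The delicate point is ruling out "cluster splitting": a priori a maximiser could spread a single true class $X_a$ over two clusters with a constant mixing ratio (one checks such a $q$ also satisfies $q(x,x')=p(x,x')$), and what actually forbids it is the fixed cardinality $C$ together with hypothesis (i) --- the counting argument uses both that there are exactly $C$ clusters and that the true classes are disjoint, since splitting one class would starve another. Some care is also needed with the almost-everywhere qualifiers (the pairwise disjointness of $\operatorname{supp}q(\cdot\vert x)$ holds for a.e.\ pair, so the $S_a$ should be read up to null sets) and with the degenerate case $q(c)=0$, handled by the convention that the corresponding $\pmi$ term is $0$.
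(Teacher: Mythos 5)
Your proof is correct and takes essentially the same route as the paper's: the decomposition of the expected pointwise mutual information as $\operatorname{I}(x;x') - \operatorname{KL}\bigl(p(x,x') \,\|\, q(x,x')\bigr)$ (the paper's Lemma~1), which forces $q(x,x')=p(x,x')$ at the optimum, followed by a disjoint-support pigeonhole/counting argument that recovers $p(c|x)$ up to a permutation of cluster indices. Your write-up is merely a bit more explicit than the paper about achievability of the optimum, the almost-everywhere qualifiers, and the $q(c)=0$ degenerate case, but the substance is the same.
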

The proof can be found in the supplementary material. \Cref{thm:optimal} states that under condition (i) the knowledge of pairs of examples belonging to the same class suffices to establish an objective for an optimal classification model.

\subsection{Self-distillation Clustering Framework} 
The starting point is a pretrained feature extractor (backbone) $g(\cdot)$ that assigns each example $x$ in the dataset $D$ a feature vector $g(x)$. We mine the $k$ nearest neighbours ($k$-NN) of $x$ in the feature space by computing the cosine similarity between $g(x)$ and the feature vectors of all other images in $D$. We denote the set of $k$-NN for x by $S_{x}$. %
During training, we randomly sample $x$ from $D$ along with $x'$ from $S_x$, to generate image pairs that share label information with high probability.

We introduce two clustering heads, a \emph{student head}, $h_s(\cdot)$, and a \emph{teacher head}, $h_t(\cdot)$, that share the same architecture but differ w.r.t.\ their parameters, $\theta_s$, and $\theta_t$. Each head consists of a three-layer fully connected feed-forward network. The image pairs $x,x'$ are fed to the shared backbone and subsequently, in the two heads, $h_s(g(x))$ and $h_t(g(x'))$. %
The head outputs are converted to probabilistic classifiers, $\pstud(c|x)$ and $\pteach(c|x')$, using a temperature-scaled softmax function, which for the student's head is given by
\begin{equation}
    \pstud(c|x) = \frac
    {\exp(h_s(g(x))_c \,/\, \tau)} 
    {\sum_{c'} \exp(h_s(g(x))_{c'} \,/\, \tau)},
\end{equation}
where $\tau$ is the temperature hyperparameter. Unlike previous self-distillation frameworks \cite{dino}, we use the same temperature $\tau=0.1$ for both heads. We approximate the pointwise mutual information by 
\begin{equation}
\widetilde\pmi(x,x')\coloneqq
     \log \sum_{c=1}^C \frac{
    \pstud(c|x)\pteach(c|x')
    }{\tilde q_t(c)}.
    \label{eq:objective}
\end{equation}
and estimate $q(c)$ by an exponential moving average (EMA) over batches using the teacher's head
\begin{equation}
\tilde q_t(c) \leftarrow m \,\tilde q_t(c) +
(1 - m) \frac{1}{B}\sum_{i=1}^B \pteach(c|x_i),
\end{equation}
with $B$ the batch size and $m \in (0,1)$ a momentum parameter. In practice, we symmetrize \Cref{eq:objective} to compute the loss function
\begin{equation}
    \mathcal{L}(x,x') \coloneqq -\frac{1}{2}\left( \widetilde\pmi(x,x')+ \widetilde\pmi(x',x)\right).
    \label{eq:symmetrized-loss}
\end{equation}
Note that only the parameters $\theta_s$ of the student's head are updated using backpropagation. The parameters of the teacher's head, $\theta_t$, are updated by an exponential moving average for the student parameters, $\theta_s$, over past update steps \cite{dino,byol}. As a result, $p_t(c|x)$ represents a sufficiently stable target distribution for the student head. In contrast to other self-distillation frameworks \cite{dino}, no complicated adaptation of softmax temperatures over training is required. Following previous work \cite{scan}, we employ an ensemble of $H$ independent clustering heads in training (\cref{fig:diagram}), which alleviates the variability stemming from random initialization. For the evaluation, we use the teacher head with the lowest training loss.

\subsection{Balancing class utilization}
For a dataset $D$ that has been generated using balanced classes, $p(c)=\mathit{const}$, we expect that $\tilde q(c)\approx \mathit{const}$, as a consequence of the optimization process. However, in practice, we observe that classes are typically far from uniformly utilized. We suspect that our self-distillation learning framework leads to over-confident class predictions for a fraction of classes in the early training phase. 

To limit the effect of aligning the cluster predictions and take into account the individual cluster probability of the mini-batches $\tilde q_t^i(c)$, we introduce a hyperparameter $\beta \in (0.5, 1]$ in \Cref{eq:objective} to avoid collapsing all sample pairs in a single cluster. Without affecting the optimal solution, we rewrite \Cref{eq:objective} as

\begin{equation}
    {\widetilde{\pmi}\,}^i(x,x')= \log \sum_{c=1}^C \frac{
    \left( \pstud^i(c|x) \pteach^i(c|x')\right)^\beta}
    {\tilde q_t^i(c)},
\label{eq:objective-pmi}
\end{equation}

where $i \in \{ 1, \dots, H \}$ is the head index. 

\subsubsection{Intuition for $\beta$.} We now provide an explanation of how the above equation addresses the discussed challenges of image clustering. \Cref{eq:objective-pmi} consists of two parts inside the $\log$ sum: the numerator $\left( \pstud^i(c|x) \pteach^i(c|x')\right)^\beta$ encourages the class assignment of a positive pair to align (consistency) and is maximal when this assignment is one-hot. The denominator $\tilde q_t^i(c)$ promotes a uniform cluster distribution by up-weighing the summand corresponding to classes with low probability ($\tilde q_t^i(c)$ is low). In effect, $\beta$ balances these two effects by reducing the influence of the numerator, and thus degenerative solutions are avoided. 

Note that for $\beta=0.5$, the loss corresponds to the Bhattacharyya distance \cite{bhattacharyya1946measure} if $\tilde q_t^i(c)=const$. The Bhattacharyya distance can be minimal even if $q_t^i$ is far from one-hot. Moreover, if utilization of all classes is not required -- for example, as in the case of overclustering -- we set $\beta=1$. Crucially, $\beta$ is bounded, and we empirically found that the value of $0.6$ consistently avoids degenerative solutions across the majority of datasets, as opposed to existing clustering methods \cite{scan,sscn}. In addition, we propose an experimental strategy to choose $\beta$ without access to the ground-truth labels, as explained in \Cref{sec:experimental-results}. The symmetrized loss from \Cref{eq:objective-pmi} is defined as $\mathcal{L}^i(x,x')$ in analogy to \Cref{eq:symmetrized-loss}.

\subsection{Teacher-guided Instance Weighting}
\label{subsec:weight}
As discussed in \Cref{intro}, the mined $k$-NN in the feature space of $g(\cdot)$ tend to be noisy. For this reason, we introduce an instance weighting term for each head $i$ given by
\begin{equation}
    w_{i}(x,x') = \sum_{c=1}^C \pteach^i(c|x)\pteach^i(c|x').
\end{equation}
Intuitively, $w_{i}(x,x')$ acts as a guidance term that assigns a higher weight to true positive pairs compared to false positive ones. Importantly, $ w_{i}(x,x')$ relies only on the predictions of the teacher. The rationale behind this is that model averaging over training iterations tends to produce more accurate predictions \cite{tarvainen2017mean,polyak1992acceleration}. We call this setup teacher-weighted pointwise mutual information (WPMI). The final objective for each separate head $i$ is given by  $\mathcal{L}^{i}_{\operatorname{WPMI}}(x,x') \coloneqq   w_{i}(x,x') \mathcal{L}^{i}(x,x').$

\subsubsection{TEMI: Teacher Ensemble-weighted pointwise Mutual Information}
To compensate for the noisy NN pairs based on the feature space of $g$, we further propose to aggregate information from multiple heads, in contrast to previous works \cite{scan} that employ independent heads. For this purpose, we compute a scalar for each image pair using the mean weight across the heads, which is conceptually similar to model ensembling. We thus call this loss TEMI (teacher ensemble-weighted pointwise mutual information) defined by
\begin{equation}
     \mathcal{L}^{i}_{\operatorname{TEMI}}(x,x') \coloneqq   \frac{1}{H} \sum_{j=1}^H w_{j}(x,x') \mathcal{L}^{i}(x,x').
\end{equation}

\section{Experimental evaluation}
\label{experimental}

\subsection{Datasets, Metrics and Implementation Details}
The proposed method (TEMI) is evaluated on five common benchmark datasets, namely CIFAR10, CIFAR20, CIFAR100 \cite{cifar}, STL10 \cite{stl10}, and ImageNet \cite{deng2009imagenet}. CIFAR10, CIFAR20 and CIFAR100 contain $50K$ training images, STL10 contains $5K$ training samples, and ImageNet has \num{1281167} training samples. CIFAR20 has the same training data as CIFAR100 with $20$ superclasses derived from the ground-truth labels. We resize all images to $224\times224$. The training set is used during the optimization phase, while the evaluations are carried out on the validation set. Additional information can be found in the supplementary material.

To quantify the clustering performance, we report the clustering accuracy (ACC) and the adjusted random index (ARI). To estimate the accuracy, the one-to-one mapping between cluster predictions and ground-truth labels is computed by the Hungarian matching algorithm \cite{kuhn1955hungarian}. For our overclustering experiments, we report the adjusted mutual information (AMI). Finally, we establish two baselines: a) $k$-means and b) the SCAN clustering loss within our self-distillation framework. For a fair comparison with existing methods, we assume to know in advance the number of ground-truth labels. Concerning the hyperparameters, we set $H=50$ and $\beta=0.6$ for clustering, while we set $\beta=1$ for overclustering. We use $25$-NN on ImageNet and $50$-NN for the remaining datasets. We used the AdamW optimizer \cite{adamw} for $200$ epochs with a batch size of $512$, with a learning rate of $10^{-4}$, weight decay of $10^{-4}$ and report results at the end of training. Unlike previous methods \cite{scan}, we found that augmentations (RandAugment \cite{cubuk2020randaugment}, and the ones from \cite{simclr}) were not improving the clustering metrics when training with $k$-NN pairs. Hence, we precomputed the feature representations, which enables training the clustering heads on a single GPU with $12$GB of VRAM within $24$ hours.

\begin{figure}[htbp]
\begin{minipage}{.5\textwidth}
     \includegraphics[width=\textwidth]{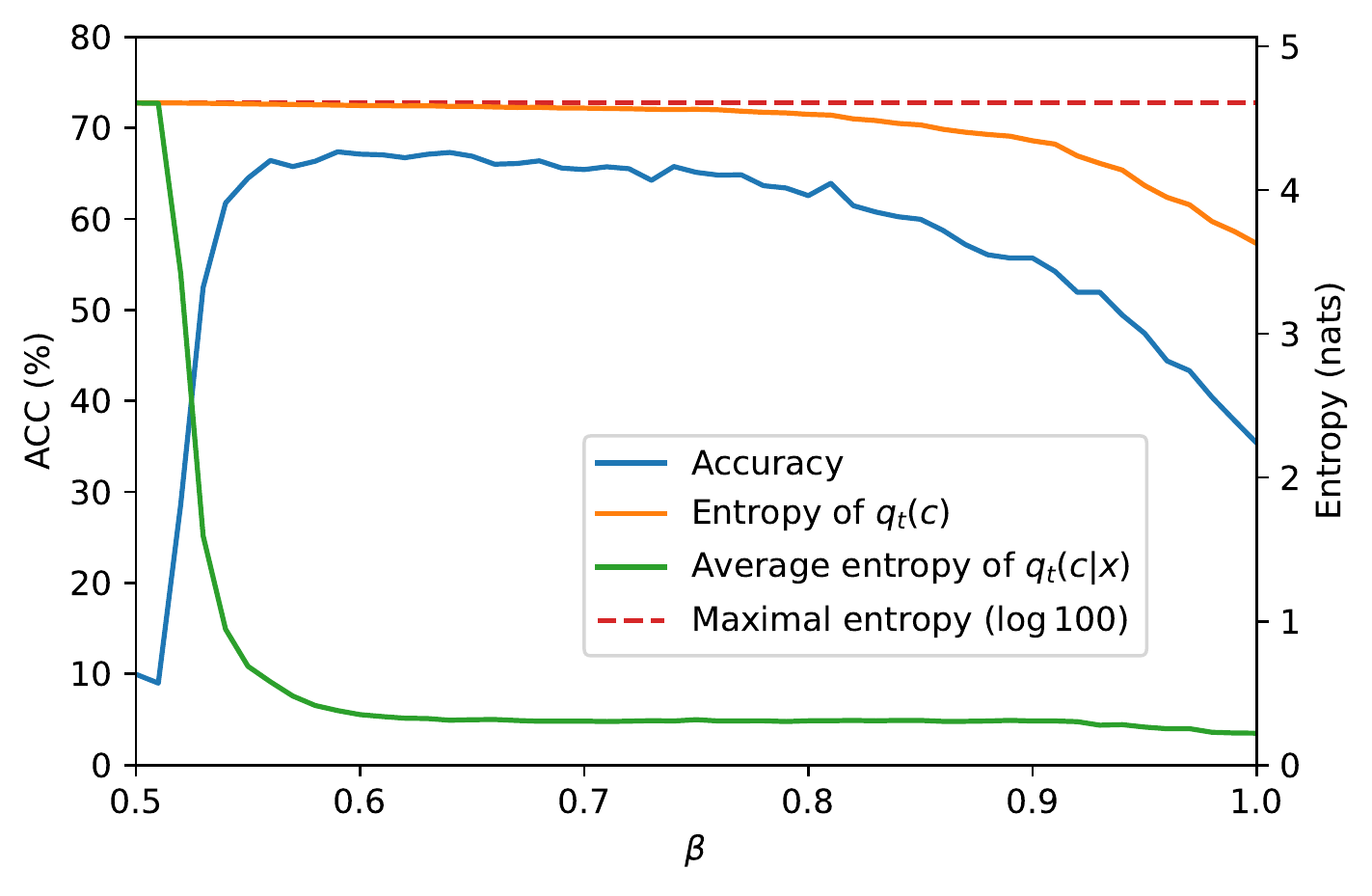}
\caption{\textbf{Effect of $\beta$ on the validation accuracy and on the entropy of $\pteach(c|x)$ and $\pteach(c)$ on CIFAR100}. The values are computed using TEMI DINO ViT-B/16. The dashed horizontal line illustrates the maximal possible entropy, i.e. $\log C$. A high entropy of $\pteach(c)$ indicates that the clusters are almost uniformly utilized, while a low entropy of $\pteach(c|x)$ indicates highly confident predictions (one-hot).}
\label{fig:beta_ent}
\end{minipage}
\hfill
\begin{minipage}{.45\textwidth}
\begin{center}
\includegraphics[width=\textwidth]{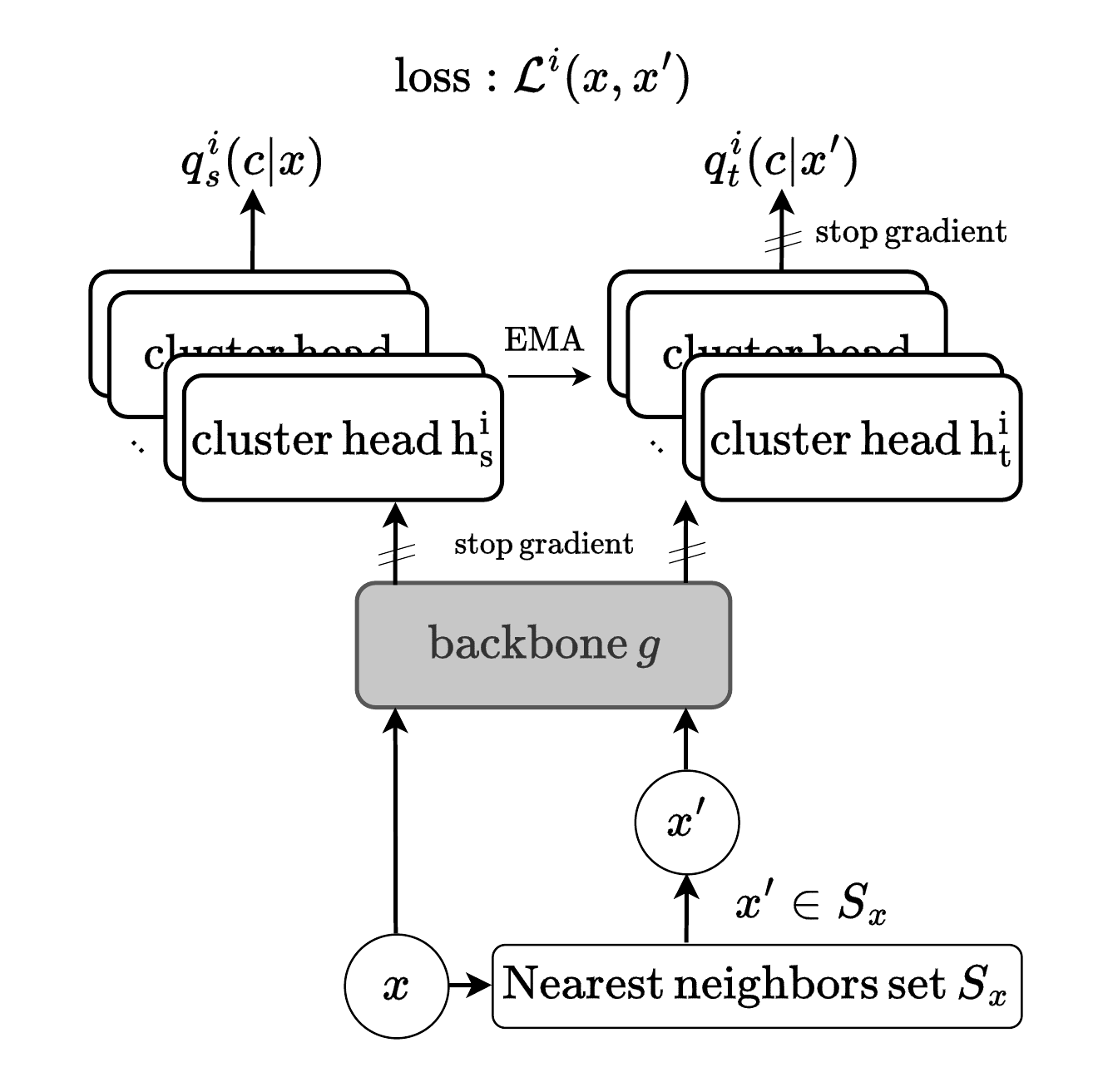}
\end{center}
\caption{\textbf{An overview of the proposed self-distillation clustering framework.} The nearest neighbors are mined in the feature space of $g$. EMA refers to the exponential moving average over the parameters.}
\label{fig:diagram}
\end{minipage}
\end{figure}

\subsection{Experimental Results}
\label{sec:experimental-results}
We first present a strategy to choose $\beta \in (0.5,1]$. As depicted in \Cref{fig:beta_ent}, an accurate model, $p_t(c|x)$ should be able to maintain a high entropy $\Entropy(\pteach(c))$, while maintaining its discriminative power. To quantify the latter, we use the conditional entropy $\Entropy(\pteach(c|x))$. The lower the value of $\Entropy(\pteach(c|x))$, the more discriminative the predictions. The extreme case $\Entropy(\pteach(c|x))=0$ corresponds to a one-hot distribution. Thus, we propose to pick the lowest value of $\beta$ such that $\Entropy(\pteach(c|x))$ remains sufficiently low. We experimentally found $0.6$ to work consistently well across models and datasets.

As shown in \Cref{table:losses}, an average accuracy gain of $5.0$\% over $k$-means is found for CIFAR100 and ImageNet, even with the plain PMI setup. Introducing multiple heads in PMI further improves the obtained results by an average gain of $0.8$\%. Critically, for our best setup (TEMI), we observe an average gain of $8.1$\% and $3.7$\% compared to $k$-means and the SCAN clustering loss, respectively. Note that even $16$ heads were sufficient to get similar performance, specifically less
than 1\% accuracy deterioration compared to 50 heads.

\begin{wraptable}{O}{.55\textwidth}

 \begin{center}
  \begin{tabular}{lccc}
    \toprule
    Method  & Heads & CIFAR100 & ImageNet  \\
    \midrule
    \textit{k}-means & - & 56.99 & 52.26  \\
    SCAN & 50 & 62.6$\pm$0.94 & 55.6$\pm$0.15 \\
    \hline
    PMI & 1 & 61.6$\pm$0.41 & 57.5$\pm$0.22  \\
    WMI & 1 & 63.4$\pm$1.89 & 56.5$\pm$0.41 \\
    \hline
    PMI & 50 & 63.1$\pm$0.56 & 57.7$\pm$0.06  \\
    WPMI & 50 & 65.6$\pm$1.04 & 57.0$\pm$0.38 \\
    TEMI & 50 & \textbf{67.1$\pm$1.30} & \textbf{58.4$\pm$0.22}  \\
    
    \bottomrule
  \end{tabular}
  \end{center}
    \caption{\textbf{Ablation study for the TEMI objective}. All the experiments were conducted with $\beta=0.6$, and DINO ViT-B/16 as the backbone model. The clustering accuracy is reported in \%.}
    \label{table:losses}
\end{wraptable}

To study the applicability of our method, we then applied our best setup (TEMI) to various publicly available pretrained models, as shown in Fig \ref{fig:main}. Therein, we report an average accuracy gain of $6.1$\% and $12.1$\% compared to $k$-means on ImageNet and CIFAR100 across $17$ different pretrained models. More specifically, TEMI MSN ViT-L/14 and TEMI DINO ViT-B/16 are the best-performing self-supervised methods on ImageNet ($61.6$\% ACC) and CIFAR100 ($67.1$\% ACC). Moreover, CLIP-based backbones have the highest ACC increase over $k$-means when trained with TEMI, precisely $10.7$\% on ImageNet and $14.1$\% on CIFAR100.

\begin{table}[htbp]
\begin{minipage}{.45\textwidth}
  \begin{tabular}{lccc}
    \toprule
    Method  & Arch.   & ACC &   ARI \\
    \hline
    SeLa \cite{sela} & Resnet50  & 30.5 &  16.2  \\
    SCAN \cite{scan} & Resnet50    & 39.9 & 27.5  \\
    SSCN \cite{sscn} & Resnet50    & 41.1  & 29.5 \\
    \midrule
     \multicolumn{4}{l}{\textit{Our method}}\\	
    TEMI DINO  & Resnet50    & 45.2  & 31.3 \\
    TEMI DINO & ViT-B/16     & 58.4  &  45.9 \\
    TEMI MSN   & ViT-L/16  & \textbf{61.6}  &  \textbf{48.4} \\
    \bottomrule
  \end{tabular}
  \newline \newline
\caption{\textbf{Clustering results for the ImageNet validation set, without using additional data.} Evaluation metrics include clustering accuracy (ACC), and adjusted random index (ARI) in \%. All our models are pretrained on ImageNet.}
\label{tab:in1k}
\end{minipage}
\hfill
\begin{minipage}{.45\textwidth}

\begin{tabular}{lc}
  \toprule
  Method      & AMI (\%)   \\
  \midrule
  DeepCluster~\cite{deepcluster}  & 28.1 \\
  MoCo~\cite{moco}        & 28.5 \\
  PCL~\cite{pcl}     & 41.0 \\
  ProPos~\cite{propos}     & 52.5 \\
  \midrule
  TEMI DINO Resnet50 & 51.8$\pm$0.1 \\
  TEMI DINO ViT-B/16 & \textbf{59.9$\pm$0.2} \\
  TEMI MSN ViT-L/16 & 58.8$\pm$0.5 \\
  \bottomrule
  \end{tabular}
  \newline \newline
  \caption{\textbf{Overclustering results on the ImageNet validation set.} The adjusted mutual information (AMI) score for 25K clusters is reported, as in \cite{pcl}. For all experiments, we set $\beta=1$.}
  \label{tab:result_overclustering_in1k}
\end{minipage}
\end{table}

Concerning the supervised pretrained models in Fig. \ref{fig:main}, we demonstrate that ConvNext-L outperforms ViT-L on ImageNet, precisely by $2.7$\% on ACC with TEMI. However, the supervised ViT-L surpasses ConvNext-L by a large margin of $22.3$\% in ACC when benchmarked on CIFAR100 with TEMI. Among the architectures investigated, large ViTs learn the most transferable label-related features, even without supervised fine-tuning. This finding is consistent with \cite{naseer2021intriguing}.

\textbf{Clustering and overclustering results on ImageNet.} Regarding ImageNet, we compare various self-supervised architectures that were trained without any external data, as depicted in \Cref{tab:in1k}. Using the same architecture (Resnet50) as current state-of-the-art models (SSCN \cite{sscn}), TEMI achieves an improvement of $4.1$\% in ACC. With MSN ViT-L/16 as the backbone, we push the state-of-the-art ACC on ImageNet to $61.6$\%, resulting in a substantial gain of $20.5$\% compared to SSCN. The obtained results strongly indicate that first learning the augmentation-invariant features and then focusing on learning the invariances w.r.t.\ images that belong to the same class is an effective strategy for image clustering. Incentivized by the above observation, we investigate the overclustering performance in \Cref{tab:result_overclustering_in1k} by adopting the setup from \cite{pcl}. More concretely, we use $25$K clusters and set $\beta=1$ as in ProPos. We almost match the performance of ProPos \cite{propos} with TEMI DINO Resnet50 without tuning the number of clusters or any other hyperparameter while reaching a considerable gain of $7.4$\% in AMI with TEMI DINO ViT-B/16.

\textbf{Small-scale benchmarks.} In \Cref{tabel:small-datasets-sota}, the transfer performance on three small-scale datasets is evaluated. TEMI DINO ViT-B backbone has inarguably the best transfer performance, outperforming the ACC of ProPos by 4.6\% and TSP by 2.9\% on average. It is worth pointing out that TSP \cite{tsp} uses the same pretrained model, and it is thus a fair comparison. Ultimately, we notice a large accuracy gap between clustering methods and probing in CIFAR20, which suggests that the superclass structure cannot be inferred from the visual input. For instance, clocks, lamps, and telephones are grouped into household electrical devices.

\begin{table}[htbp]
\begin{minipage}{.5\textwidth}

\begin{adjustbox}{width=\columnwidth}
\begin{tabular}{ l c c c c   }
			\toprule
			{Methods} & {CIFAR10}     &  {CIFAR20}      & {STL10} \\\hline
NNM \cite{dang2021nearest} & 84.3 & 47.7  & 80.8 \\  
PCL \cite{pcl} &  87.4 & 52.6  & 41.0 \\ %
SCAN \cite{scan} &  88.3  & 50.7 &  80.9  \\
SPICE \cite{niu2022spice} &  92.6  & 53.8  & 93.8 \\
ProPos$^\star$ \cite{propos} & 94.3 & 61.4 & 86.7 \\ %
TSP$^{\dagger}$ \cite{tsp}  & 94.0 & 55.6 & 97.9  \\ %
\hline
TEMI$^{\dagger}$ &  \textbf{94.5} &  \textbf{63.2}  &  \textbf{98.5} &  \\ 
\hline 
  \multicolumn{4}{l}{\textit{supervised baseline}}\\
  \multicolumn{1}{l}{Probing$^{\dagger}$ } & 96.8  & 89.5 & 99.2 \\ 
			\bottomrule
		\end{tabular}
\end{adjustbox}
\newline \newline
\caption{\textbf{Clustering accuracy on small datasets.} Methods with $^{\dagger}$ use DINO ViT-B pretrained on ImageNet, while $^{\star}$ indicates methods that include the validation split during training.}
\label{tabel:small-datasets-sota}

\end{minipage}
\hfill
\begin{minipage}{.45\textwidth}

\begin{center}
\begin{tabular}{lcc}
\toprule
Loss &  Val.  & Train  \\
 &   ACC &  ACC \\
\midrule
PMI   & \textbf{84.1$\pm$0.36}  & \textbf{98.6$\pm$0.38} \\
TEMI & 82.6$\pm$0.67  & 96.5$\pm$0.88 \\
\hline
 \multicolumn{3}{l}{\textit{supervised baseline}}\\
Probing & 85.3  & 99.3 \\
\bottomrule
\end{tabular} 
\end{center}

 \caption{\textbf{Clustering accuracies on CIFAR100 when training only with the true positive NN pairs using TEMI DINO ViT-B/16.}}
 \label{table:true-pairs}

\end{minipage}
\end{table}

\textbf{Analysis on noise (false positives) from the NN of $g$.} As shown in Tab.\ 5, when keeping only the true positive neighbours from 50-NN, we increased the performance from $67.1$ $\rightarrow$ $82.6$ using TEMI DINO ViT-B on CIFAR100. We also show that the head weighting term of TEMI in Eq.\ 10 is not needed, highlighting that TEMI is designed for the noisy pairs obtained from $k$-NN. As a reference, DINO ViT-Base has 72\% true positive pairs in 20-NN and 66\% in the mined 50-NN on CIFAR100.

\subsection{Discussion}
\label{discussion}
\noindent\textbf{How expressive can an image classifier be by only training with NN pairs?} We examine the training accuracy in \Cref{table:true-pairs}, by training with the true positive pairs from the computed $k$-NN. The 98.6\% training accuracy on CIFAR100 with TEMI DINO ViT-B/16 indicates that it is possible to train a powerful unsupervised image classifier by only relying on pairs. In fact, we observe that we almost match the supervised linear probing accuracy on CIFAR100 (84.1\% vs 85.3\%). Still, we identify cases where the human-annotated label is ambiguous and cannot be determined solely by the visual signal (supplementary material).

\noindent\textbf{What is the impact of the instance weighting term?} 
After training, we examined the actual value of the instance weighting term $w(x,x')$. To this end, we computed the mean weights for true positives and false positives sampled from $50$-NN within the CIFAR100 validation set, which take the values $0.76$ and $0.40$, respectively. Furthermore, $w(x,x')$ has a negative impact when only true positive pairs are considered during training (\Cref{table:true-pairs}). This is an expected behavior, as a fraction of true positive pairs will be down-weighted by $w(x,x')$ due to low feature similarity (i.e. digital and analog clocks).

\noindent\textbf{How discriminative are the cluster assignments of TEMI?}
Besides Fig.\ 2, we quantify the discriminative power of TEMI by computing the mean and median maximum softmax probability (MSP \cite{hendrycks2016msp}). We calculate a mean and median MSP of 88.5\% and 98.9\% on CIFAR100 and 85.3\% and 99.2\% on ImageNet. The computed results verify that the introduced framework results in discriminative predictions.

\noindent{\textbf{Joint learning of encoder and cluster head.}} When jointly training the pretrained backbone with the already trained head, we observed a performance increase only when the pretraining dataset was different from the downstream dataset (67.1\% $\rightarrow$ 70.9\% ACC on CIFAR100 using the ImageNet-pretrained DINO ViT-B/16). We hypothesize this enables learning features specific to the training distribution. Still, the structure in the latent space of the pretrained model is required for TEMI to determine the $k$-NN.

\section{Conclusion}
\label{conclusion}
In this paper, a novel and general self-distillation framework for image clustering was proposed that can achieve competitive results almost out of the box. In addition, a new objective based on pointwise mutual information was presented. After studying the performance of $17$ pretrained models, it was shown that TEMI can be used with any pretraining with significant improvements over $k$-means. Finally, new state-of-the-art results were achieved on ImageNet both for clustering and overclustering, leveraging self-supervised ViTs. To conclude, future works are encouraged to explore the connection between image clustering and representation learning in greater depth.
\bibliography{bibl}

\appendix

\section{Further discussion points}

\subsection{How to choose $\beta$ for a new dataset?}
Here, we provide a more detailed explanation of Fig. 3 (in the main text) on how to pick $\beta \in (0.5, 1]$ without access to ground-truth data. First, the motivation behind $\beta$ is to avoid the imbalanced growth of clusters during training. The closer $\beta$ is to $0.5$, the more balanced the clusters (clusters contain a similar number of examples). The reason is that the loss contribution to assign each training sample a single class is reduced for smaller $\beta$. However, for $\beta=0.5$, each sample occupies all clusters with equal probability. Consequently, we have to impose $\beta>0.5$, but $\beta$ should be sufficiently close to $0.5$. We take for $\beta$ the value when the conditional entropy, $E_x[\sum_c -q(c|x) log q(c|x)]$ (Fig. 3 in the main text, green line), is starting to become constantly low. We experimentally found $0.6$ to work consistently well across models and datasets. An exception is CIFAR20, where we used $\beta=0.55$ since superclasses are conceptually a form of under-clustering.

\subsection{Fine-tuning the pretrained backbone with TEMI}
Given a pretrained backbone network, fine-tuning the backbone simultaneously with training randomly initialized heads gave bad results. However, fine-tuning the backbone simultaneously with fine-tuning the already trained head with TEMI yielded superior performance but only when the pretraining dataset was different from the downstream dataset, e.g. $67.1 \rightarrow 70.9$ for CIFAR100 using DINO ViT-B/16 pretrained on ImageNet as the backbone model.

\subsection{Additional computational complexity from multiple heads}
In theory, the computational time complexity of TEMI by adding multiple heads is linear, given a sequential implementation. In practice, due to GPU-related optimizations, it's much faster. In fact, training on a single Nvidia A100 GPU takes only 4 GB of memory with $50$ heads on CIFAR100. Training takes just about 45 minutes because we precompute the feature representations while training with just one head takes about 5 minutes.

\begin{figure*}[h]
    \begin{center}
    \subfigure[CIFAR100 using TEMI DINO ViT-B/16]{
    \includegraphics[width=0.47\columnwidth]{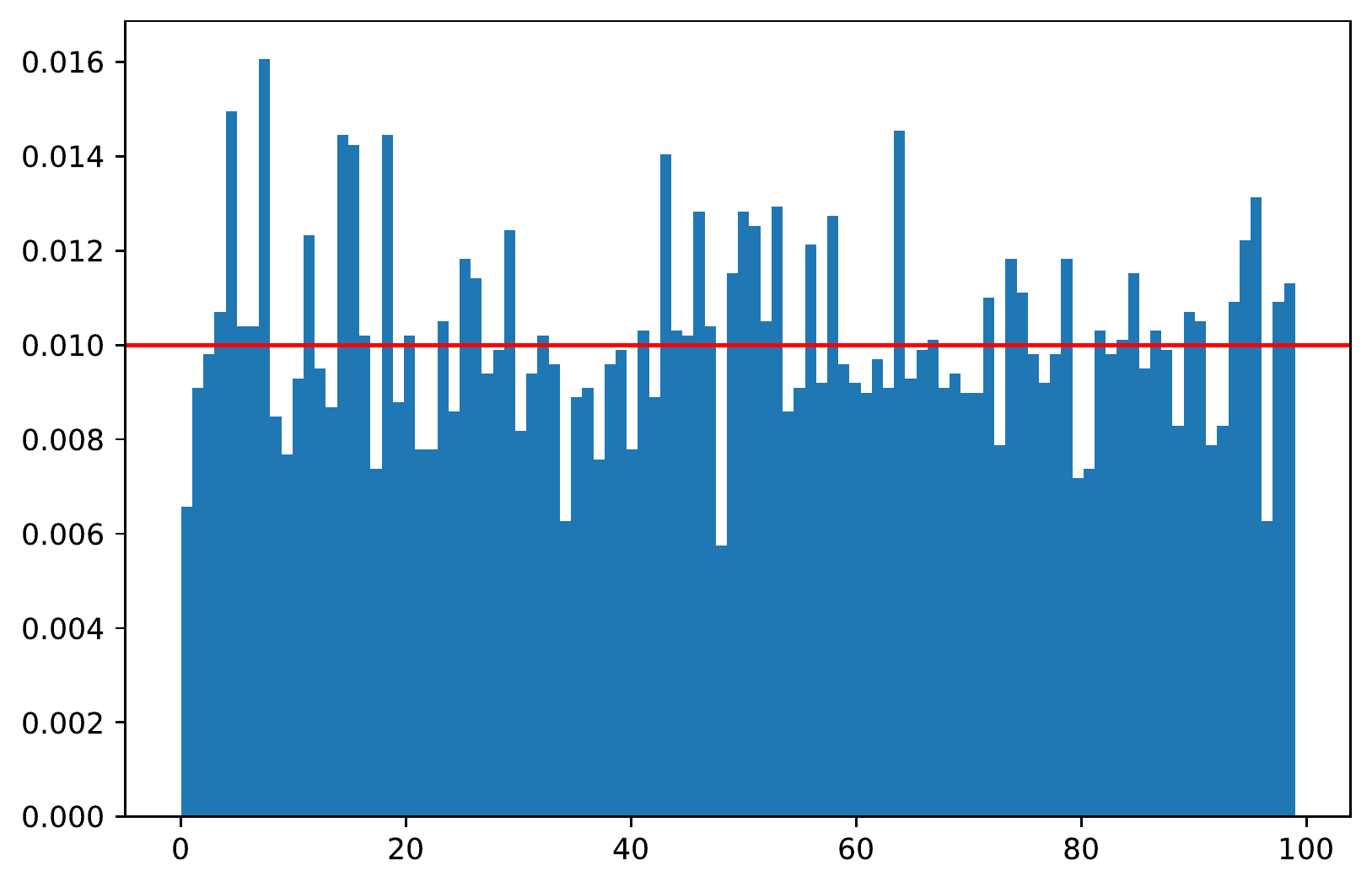}
    }%
    \subfigure[ImageNet using TEMI MSN ViT-L/16]{
    \includegraphics[width=0.47\columnwidth]{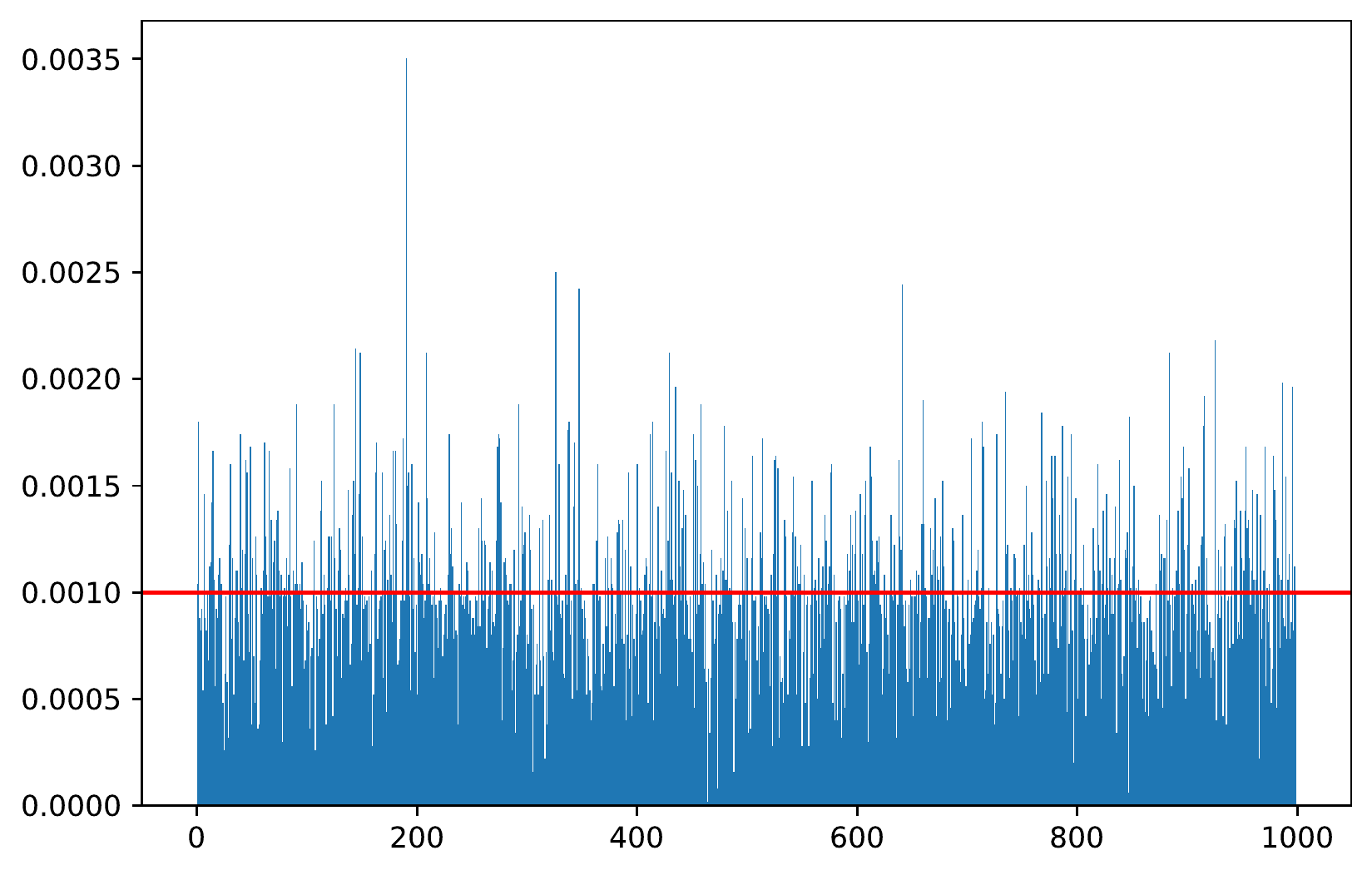}
    }
    \newline
    \caption{\textbf{Histogram of cluster assignments on different datasets}. The horizontal red line illustrates the ideal histogram, where all clusters would be uniformly utilized. We also compute the KL divergence between the predictions and the uniform distribution on CIFAR100 and ImageNet, which is $1.5\cdot10^{-2}$ and $5\cdot10^{-2}$, respectively. The predictions would be uniform in the extreme case where the KL divergence is 0.}
    \label{fig:pk_histogram}
     \end{center}
\end{figure*}

\subsection{How discriminative are the resulted cluster assignments of TEMI?}
Besides Fig. 2 in the main text, we quantify the discriminative power of TEMI by computing the mean and median maximum softmax probability (MSP \cite{hendrycks2016msp}). We calculate a mean and median MSP of 88.5\% and 98.9\% on CIFAR100 and 85.3\% and 99.2\% on ImageNet. The computed results verify that the introduced framework results in discriminative predictions.

\subsection{Are multiple heads necessary?}
The idea of using multiple heads is inspired by previous works, such as SCAN \cite{scan} and SSCN \cite{sscn}. The proposed PMI objective does not require multiple heads by design. As shown in Table 3, we experimentally observed an initial gain of 0.8\% by adding independent heads (PMI and WMI setup with 50 heads). Importantly, one of our core novelties lies in the combination of the teacher predictions from multiple heads, Eq. (10), in the main text, which provides superior results compared to having independent heads (Table 3 in the main text).  Overall, we find the reported performances saturate quickly with more heads and are already close to the maximum for 16 heads on CIFAR100. Based on our first results on CIFAR100, we fixed the number of heads to 50 for all models and datasets.

\subsection{Contrastive versus non-contrastive self-supervised pretraining for image clustering.}
The performance gap between contrastive (MoCoV3 ViT-B) and non-contrastive (DINO ViT-B) backbones likely originates from the homogeneous distribution of examples in feature space as part of the contrastive learning objective, which likely attenuates the necessary structure in feature space for image clustering \cite{wang2020understanding,propos}.

\begin{table*}
	
 \begin{center}
 \begin{adjustbox}{width=1\columnwidth}
		\begin{tabular}{ l c c c c c c c c c c c }
			\toprule
			\multicolumn{1}{c}{Datasets} & \multicolumn{3}{c}{CIFAR10} &      & \multicolumn{3}{c}{CIFAR20} &      & \multicolumn{3}{c}{STL10} \\
			\cmidrule{2-4}\cmidrule{6-8}\cmidrule{10-12}    \multicolumn{1}{c}{Methods} & NMI(\%)  & ACC(\%) & ARI(\%) &      & NMI(\%) & ACC(\%) & ARI(\%) &      & NMI(\%) &ACC(\%)& ARI(\%)\\
\hline
DAC (\citeauthor{chang2017deep}) & 39.6 & 52.2 & 30.6 &      & 18.5 & 23.8 & 8.8  &      & 36.6 & 47   & 25.7 \\ 
DCCM (\citeauthor{wu2019deep}) & 49.6 & 62.3 & 40.8 &      & 28.5 & 32.7 & 17.3 &      & 37.6 & 48.2 & 26.2 \\
PICA (\citeauthor{huang2020deep}) & 59.1 & 69.6 & 51.2 &      & 31   & 33.7 & 17.1 &      & 61.1 & 71.3 & 53.1 \\  
NNM (\citeauthor{dang2021nearest}) & 74.8 & 84.3 & 70.9 &      & 48.4 & 47.7 & 31.6 &      & 69.4 & 80.8 & 65 \\  
PCL (\citeauthor{pcl}) &  80.2 & 87.4 & 76.6        &      & 52.8 & 52.6 & 36.3       &  & 71.8 & 41.0 & 67.0  \\ %
SCAN (\citeauthor{scan}) & 79.7 &  88.3 & 77.2 &      & 48.6 & 50.7 & 33.3 &      & 69.8 & 80.9 & 64.6 \\
SPICE (\citeauthor{niu2022spice}) & 86.5 &  92.6 & 85.2 &      & 56.7 & 53.8 & 38.7 &      & 87.2 & 93.8 & 87.0 \\
ProPos$^\star$ (\citeauthor{propos})  &  88.6 & 94.3 & 88.4   &      & 60.6 & 61.4 & 45.1   &    & 75.8 & 86.7  & 73.7  \\ %
TSP$^{\dagger}$ (\citeauthor{tsp})  & 88.0 & 94.0 & 87.5 &      &61.4 & 55.6 & 43.3 &      & 95.8 & 97.9 & 95.6 \\ %
\hline

	 TEMI DINO ViT-B/16$^{\dagger}$ & \textbf{88.6$\pm$0.05} &  \textbf{94.5$\pm$0.03} &  \textbf{88.5$\pm$0.08} &      & \textbf{65.4$\pm$0.45} &  \textbf{63.2$\pm$0.38} &  \textbf{48.9$\pm$0.21} &      & \textbf{96.5$\pm$0.13} &  \textbf{98.5$\pm$0.04} &  \textbf{96.8$\pm$0.09} \\

     TEMI MSN ViT-L/16$^{\dagger}$ & 82.9$\pm$0.16 &  90.0$\pm$0.14 &  80.7$\pm$0.22 &      & 59.8$\pm$0.04 &  57.8$\pm$0.42 &  42.5$\pm$0.08 &      & 93.6$\pm$1.10 &  96.7$\pm$0.89 &  93.0$\pm$1.74 \\
     
\hline 
   \multicolumn{12}{l}{\textit{(natural language) supervised pretraining}}\\

TEMI CLIP ViT-L/14$^{\dagger}$ &  92.6$\pm$0.13 &  96.9$\pm$0.07 &  93.2$\pm$0.15 &      &  64.5$\pm$0.12 &  61.8$\pm$1.47 &  46.8$\pm$1.17 &      & 96.4$\pm$0.79 &  97.4$\pm$0.69 &  94.9$\pm$1.26 \\

TEMI Sup. ViT-L/16$^{\dagger}$ & 91.8$\pm$0.65 &  96.0$\pm$0.53 &  91.6$\pm$1.02 &      & 65.0$\pm$0.89 &  58.4$\pm$0.98 &  45.4$\pm$1.41 &      & 82.7$\pm$2.94 &  84.6$\pm$2.37 &  73.9$\pm$2.77 \\
		\hline \hline
   \multicolumn{12}{l}{\textit{supervised baselines}}\\
  
  \multicolumn{1}{l}{Probing DINO ViT-B/16$^{\dagger}$ }   & 92.5 & 96.8 & 93.1 &     & 82.4 & 89.5 & 79.5 &      & 97.8 & 99.2 & 98.2 \\ 

       \multicolumn{1}{l}{Probing MSN ViT-L/16$^{\dagger}$} & 91.5 & 96.4 & 92.3 &   & 80.7    & 88.2 & 77.0 & & 96.8 &   98.8   & 97.4 \\

     \multicolumn{1}{l}{Probing CLIP ViT-L/14$^{\dagger}$}   & 95.1 & 98.1 & 95.8 &     & 85.7 & 91.7 & 83.6 &      & 99.2 & 99.7 & 99.4 \\ 
     
  \multicolumn{1}{l}{Probing Sup. ViT-L/16$^{\dagger}$}   & 91.5 & 96.5 & 92.4 &     & 83.7 & 
90.8 & 81.7 &      & 98.0 & 99.3 & 98.4 \\ 
			\bottomrule
		\end{tabular}%
  \end{adjustbox}	
  \end{center}
	\caption{\textbf{Clustering performance metrics on small-scale benchmark datasets, evaluated on their validation splits.} Probing means training a linear layer on top of the pretrained backbone in a supervised manner. We only highlight the best self-supervised pretrained model as the new state-of-the-art. We clarify that methods with $^{\dagger}$ use models pretrained on external data, while $^{\star}$ indicates methods that include additional dataset splits during training (i.e. validation data).}
    \label{tabel:small-datasets-sota}
\end{table*}
 
\begin{table}[h]
\begin{center}
  
		\begin{tabular}{ l c c c }
			\toprule
   \multicolumn{1}{c}{Methods} & NMI(\%)  & ACC(\%) & ARI(\%) \\
\hline    
	TEMI DINO ViT-B/16 &  \textbf{76.9$\pm$0.45} &  \textbf{67.1$\pm$1.30} &  \textbf{53.3$\pm$1.02}   \\    	
    TEMI MSN ViT-L/16    &73.0$\pm$0.20 &  61.4$\pm$0.16 &  47.4$\pm$0.42  \\
\hline 
   \multicolumn{4}{l}{\textit{(natural language) supervised pretraining}}\\

TEMI CLIP ViT-L/14 &  79.9$\pm$0.23 &  73.7$\pm$0.92 &  61.2$\pm$0.75       \\
TEMI Sup. ViT-L/16  &85.2$\pm$0.34 &  81.8$\pm$0.73 &  70.6$\pm$0.89  \\
		\hline \hline
   \multicolumn{4}{l}{\textit{supervised baselines}}\\
  \multicolumn{1}{l}{Probing DINO ViT-B/16 }      & 85.7 & 85.3 & 73.6  \\ 

    \multicolumn{1}{l}{Probing MSN ViT-L/16}   & 84.6 & 84.4 & 71.9  \\

    \multicolumn{1}{l}{Probing CLIP ViT-L/14}        & 87.4 & 87.1 & 76.5  \\ 
     
   \multicolumn{1}{l}{Probing Sup. ViT-L/16}      & 86.0 & 
86.3 & 75.0     \\ 
			\bottomrule
		\end{tabular}%
	\end{center}	
	\caption{\textbf{Clustering performance metrics on on the CIFAR100 dataset.} All methods use models pretrained on external data.}
    \label{tabel:cifar100-only}
\end{table}

\subsection{A Note on CIFAR100 VS CIFAR20}
We observe that previous works have established the CIFAR20 as a clustering benchmark. However, we believe that the CIFAR20 superclasses are not an ideal benchmark for image clustering. In the reported results in \cref{tabel:small-datasets-sota}, one can easily notice that all models perform worse in CIFAR20 than in CIFAR100 (\cref{tabel:cifar100-only}). Examples that justify the performance gap include a) clocks, computer keyboards, lamps, telephones, and televisions are grouped into household electrical devices, b) bridges, castles, houses, roads, and skyscrapers are grouped into large man-made outdoor things, and c) bears, leopards, lions, and wolfs are grouped into carnivores. These examples illustrate that the superclasses are not separable from the pixel information alone. To this end, we would like to encourage future works to adopt CIFAR100 as a benchmark for image clustering, as shown in \cref{tabel:cifar100-only}.

\begin{table*}
\begin{adjustbox}{width=0.95\columnwidth,center}
		\begin{tabular}{ l c c c c c c c c c c c }
			\toprule
			\multicolumn{1}{l}{Datasets} & \multicolumn{3}{c}{ImageNet 50} &      & \multicolumn{3}{c}{ImageNet 100} &      & \multicolumn{3}{c}{ImageNet 200} \\
			\cmidrule{2-4}\cmidrule{6-8}\cmidrule{10-12}    \multicolumn{1}{l}{Methods} & NMI(\%)  & ACC(\%) & ARI(\%) &      & NMI(\%) & ACC(\%) & ARI(\%) &      & NMI(\%) &ACC(\%)& ARI(\%)\\
				\hline
				
			SCAN (Resnet50) & 82.2 & 76.8 & 66.1 &      & 80.8 & 68.9 & 57.6 &      & 77.2 & 58.1 &	47.0 \\
  
  Propos (Resnet50)  & 82.8 & - & 69.1 &      &  83.5 & - & 63.5 &      & 80.6 & - & 53.8 \\

	\hline
TEMI DINO ViT-B/16 & 86.10$\pm$0.54 &  80.01$\pm$1.26 &  70.93$\pm$1.24 &      & 85.65$\pm$0.30 &  75.05$\pm$1.11 &  65.45$\pm$1.11 &      & 85.20$\pm$0.21 &  73.12$\pm$0.72 &  62.13$\pm$0.59 \\

TEMI MSN ViT-L/16 & \textbf{88.14$\pm$0.55} &  \textbf{84.87$\pm$1.16} &  \textbf{76.46$\pm$1.17} &      & \textbf{88.53$\pm$0.56} &  \textbf{82.86$\pm$0.73} &  \textbf{74.08$\pm$1.20} &      & \textbf{86.65$\pm$0.32} &  \textbf{77.96$\pm$0.71} &  \textbf{66.70$\pm$0.71} \\

\hline 
   \multicolumn{12}{l}{\textit{(natural language) supervised pretraining}}\\

TEMI CLIP ViT-L/14 & 92.32$\pm$0.38 &  88.27$\pm$0.53 &  82.78$\pm$0.94 &      & 90.06$\pm$0.53 &  83.43$\pm$1.98 &  75.81$\pm$1.36 &      & 88.39$\pm$0.16 &  77.76$\pm$0.37 &  69.41$\pm$0.23 \\

TEMI Sup. ViT-L/16 & 95.75$\pm$0.60 &  95.12$\pm$1.61 &  91.40$\pm$1.88 &      & 94.95$\pm$0.21 &  92.50$\pm$0.23 &  87.95$\pm$0.31 &      & 93.94$\pm$0.02 &  90.37$\pm$0.14 &  84.05$\pm$0.09 \\
	
  \hline
  \hline
     \multicolumn{12}{l}{\textit{supervised baselines}}\\
  
  \multicolumn{1}{l}{Probing DINO ViT-B/16 }  & 95.10 & 95.76 & 91.64 &     & 93.29 & 92.74 & 86.30 &      & 91.64 & 89.48 & 80.61 \\

  \multicolumn{1}{l}{Probing MSN ViT-L/16} & 94.21 & 94.92 & 90.03 &     & 93.00 & 92.42 & 85.74 &      & 91.36 & 89.02 & 79.88 \\
 
 \multicolumn{1}{l}{Probing CLIP ViT-L/14} 
   & 98.72 & 98.96 & 97.88 &     & 96.61 & 
96.16 & 92.73 &      & 95.09 & 93.57 & 88.00 \\

 \multicolumn{1}{l}{Probing Sup. ViT-L/16 }  & 97.77 & 98.12 & 96.21 &     & 96.13 & 95.76 & 91.90 &      & 95.07 & 93.60 & 88.02 \\
			\bottomrule
   \vspace{0.5cm}
   \end{tabular}
	\end{adjustbox}	
  \caption{\textbf{Clustering performances on ImageNet subsets.} All subsets were evaluated on their respective validation splits, as detailed in \Cref{tab:datasets-info}.}
    \label{tabel:imagenet-subsets}
\end{table*}

\clearpage
\section{Proof of Theorem 1}
\label{proofs}

We will first show that 
\begin{equation}
    \label{eq:thm-objective}
    \popt(c|x) = \arg\max_{q(c|x)} \Expected{x,x' \sim p(x,x')}{\pmi(x,x')},
\end{equation}

is bounded and leads to the correct joint distribution.

\begin{lemma}
\label{lemma:expected_pmi}
The mutual information
\begin{equation*}
    \MI{x}{x'} = \int p(x,x') \log \frac{p(x,x')}{p(x)p(x')} \dd x \dd x'
\end{equation*}
is an upper bound for the expected pointwise mutual information. In particular,
\begin{equation*}
\label{eq:expected_pmi}
\begin{aligned}
    & \Expected{x,x' \sim p(x,x')}{\pmi(x,x')} \\
    =& \MI{x}{x'} - \KL{p(x,x')}{\pthet(x,x'}),
\end{aligned}
\end{equation*}
where $\operatorname{KL}$ is the Kullback--Leibler divergence.  
\end{lemma}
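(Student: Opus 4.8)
The plan is to prove the identity by a direct manipulation of the expectation, after which the stated upper bound follows immediately from the non-negativity of the Kullback--Leibler divergence. Before doing so, I would record that $q(x,x')$ as defined in \Cref{eq:est-bottleneck} is a genuine probability density: integrating over both arguments gives $\sum_c q(c)\int q(x|c)\dd x\int q(x'|c)\dd x' = \sum_c q(c) = 1$, using that each $q(\cdot|c)$ is normalized and $\sum_c q(c)=1$. Hence the ratio $q(x,x')/(p(x)p(x'))$ in the definition of $\pmi$ is well-defined on the support of $p(x,x')$, and all logarithms below are integrated against the measure $p(x,x')\,\dd x\,\dd x'$.

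The core step is to insert $p(x,x')$ into the numerator and denominator of the $\pmi$ ratio and split the logarithm:
\begin{align*}
\Expected{x,x'\sim p(x,x')}{\pmi(x,x')}
&= \int p(x,x')\log\frac{q(x,x')}{p(x)p(x')}\dd x\,\dd x' \\
&= \int p(x,x')\log\frac{p(x,x')}{p(x)p(x')}\dd x\,\dd x'
 + \int p(x,x')\log\frac{q(x,x')}{p(x,x')}\dd x\,\dd x'.
\end{align*}
The first term is by definition $\MI{x}{x'}$, and the second is $-\KL{p(x,x')}{q(x,x')}$, which is exactly the claimed identity. Since $\KL{p(x,x')}{q(x,x')}\ge 0$, with equality if and only if $q(x,x')=p(x,x')$ almost everywhere, the expected pointwise mutual information is bounded above by $\MI{x}{x'}$.

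The only point needing (minor) care is the measure-theoretic bookkeeping: one restricts the integrals to the support of $p(x,x')$ and adopts the usual conventions for the KL divergence (the integrand is taken to vanish where $p(x,x')=0$, and the quantity is $+\infty$ if $p(x,x')$ is not absolutely continuous w.r.t.\ $q(x,x')$). I do not anticipate any genuine obstacle here — the lemma is essentially the familiar "cross-entropy minus entropy" decomposition rephrased for the pmi. Its role is that maximizing $\Expected{x,x'\sim p(x,x')}{\pmi(x,x')}$ over classifiers $q(c|x)$ is equivalent to minimizing $\KL{p(x,x')}{q(x,x')}$, i.e.\ to driving $q(x,x')$ towards $p(x,x')$, which is the fact that will be used to pin down $\popt(c|x)$ in the proof of \Cref{thm:optimal}.
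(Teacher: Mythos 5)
Your proposal is correct and follows essentially the same route as the paper's proof: insert $p(x,x')$ into the ratio, split the logarithm, and identify the two terms as the mutual information and minus the KL divergence, with the upper bound following from non-negativity of the KL term. The added remarks on the normalization of $q(x,x')$ and the measure-theoretic conventions are fine but not needed beyond what the paper already does.
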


\begin{proof}
\begin{align*}
    & && \Expected{x,x' \sim p(x,x')}{\pmi(x,x')}\\
    &=&& \int p(x,x') \log \frac{\pthet(x,x')}{p(x)p(x')} \dd x \dd x'\\
    &=&& \int p(x,x') \log \left(\frac{p(x,x')}{p(x)p(x')}
                            \frac{\pthet(x,x')}{p(x,x')} \right) \dd x \dd x' \\
    &=&& \int p(x,x') \log \frac{p(x,x')}{p(x)p(x')} \dd x \dd x' \\
       & &-&\int p(x,x') \log \frac{p(x,x')}{\pthet(x,x')} \dd x \dd x' \\
    &=&& \MI{x}{x'} - \KL{p(x,x')}{\pthet(x,x')}.
\end{align*}
\end{proof}

For the proof of Theorem 1, we now assume that each example $x$ belongs to exactly one cluster $c$ and need to show that the model $\popt(c|x)$ maximizing the objective
\begin{equation*}
    \popt(c|x) = \arg\max_{\pthet(c|x)} \Expected{x,x' \sim p(x,x')}{\pmi(x,x')}
\end{equation*}
is equal to $p(c|x)$ up to a permutation of the clusters.
\begin{proof}
Since $p(c|x)$ is one-hot by assumption, let us denote the class to which an image $x$ belongs as $c_x$, so that we have
\begin{equation*}
    p(c|x) = [c=c_x],
\end{equation*}
where the Iverson bracket $[c=c_x]$ is $1$ if $c=c_x$ and $0$ otherwise.
We denote the prediction of the classifier by $\hat{c}_x$, with
\begin{equation*}
    \hat{c}_x \coloneqq \operatorname*{argmax}_c \popt(c|x).
\end{equation*}
An equivalent formulation of the theorem then is:
$\popt(c|x)$ is one-hot for every $x$ and ${\hat{c}_x = \hat{c}_{x'}}$ if and only if ${c_x = c_{x'}}$.

Using the same factorization we used to formulate the pointwise mutual information in %
Equation (2) we obtain
\begin{equation*}
    \frac{p(x,x')}{p(x)p(x')} = \sum_{c=1}^C \frac{p(c|x)p(c|x')}{p(c)} = 
    [c_x = c_{x'}]p(c_x)^{-1}.
\end{equation*}

\Cref{lemma:expected_pmi} already states that the objective is maximized if and only if $\popt(x,x') = p(x,x')$
and therefore
\begin{align*}
&\pmi(x,x') = \sum_{c=1}^C \frac{\popt(c|x)\popt(c|x')}{\popt(c)} \\
& = \frac{\popt(x,x')}{p(x)p(x')}
= \frac{p(x,x')}{p(x)p(x')}
= [c_x = c_{x'}]p(c_x)^{-1}.
\end{align*}

If $c_x \ne c_{x'}$, we have
\begin{equation*}
0 \le \popt(\hat{c}_x|x)\popt(\hat{c}_x|x')/\popt(\hat{c}_x) \le \pmi(x,x') = 0.
\end{equation*}
Since $\popt(\hat{c}_x| x) > 0$, this implies
$\popt(\hat{c}_x|x')=0$ and therefore
$\hat{c}_x \ne \hat{c}_{x'}$.
Furthermore, from the pigeonhole principle it follows that $\popt(c|x) = 0$ for $c \ne c_x$ which both implies that $\popt(c|x)$ is one-hot as well as $\hat{c_x}=\hat{c}_{x'}$ if $c_x = c_{x'}$, therefore concluding the proof.
\end{proof}

\section{Additional implementation details.}
To enforce reproducibility, the means and standard deviations are reported for all our experiments and metrics, computed over $3$ independent runs with different seeds. For a fair comparison with SCAN, we tune its entropy regularization hyperparameter, $\lambda$, based on a grid search and use the value $\lambda = 4$. Crucially, we found that some pretrained models (i.e.\ MSN) produce unnormalized features. For that reason, we standardize the features of all models before feeding them to the clustering heads. For the linear probing experiments, we trained a linear layer using the Adam \cite{adam} optimizer with a learning rate of $10^{-3}$ and weight decay of $10^{-3}$.

\section{Randomly sampled images from TEMI cluster assignments}
\enlargethispage{1\baselineskip}
\label{appendix:random-image-samples}
\begin{figure}[H]
\onecolumn\includegraphics[width=\textwidth]{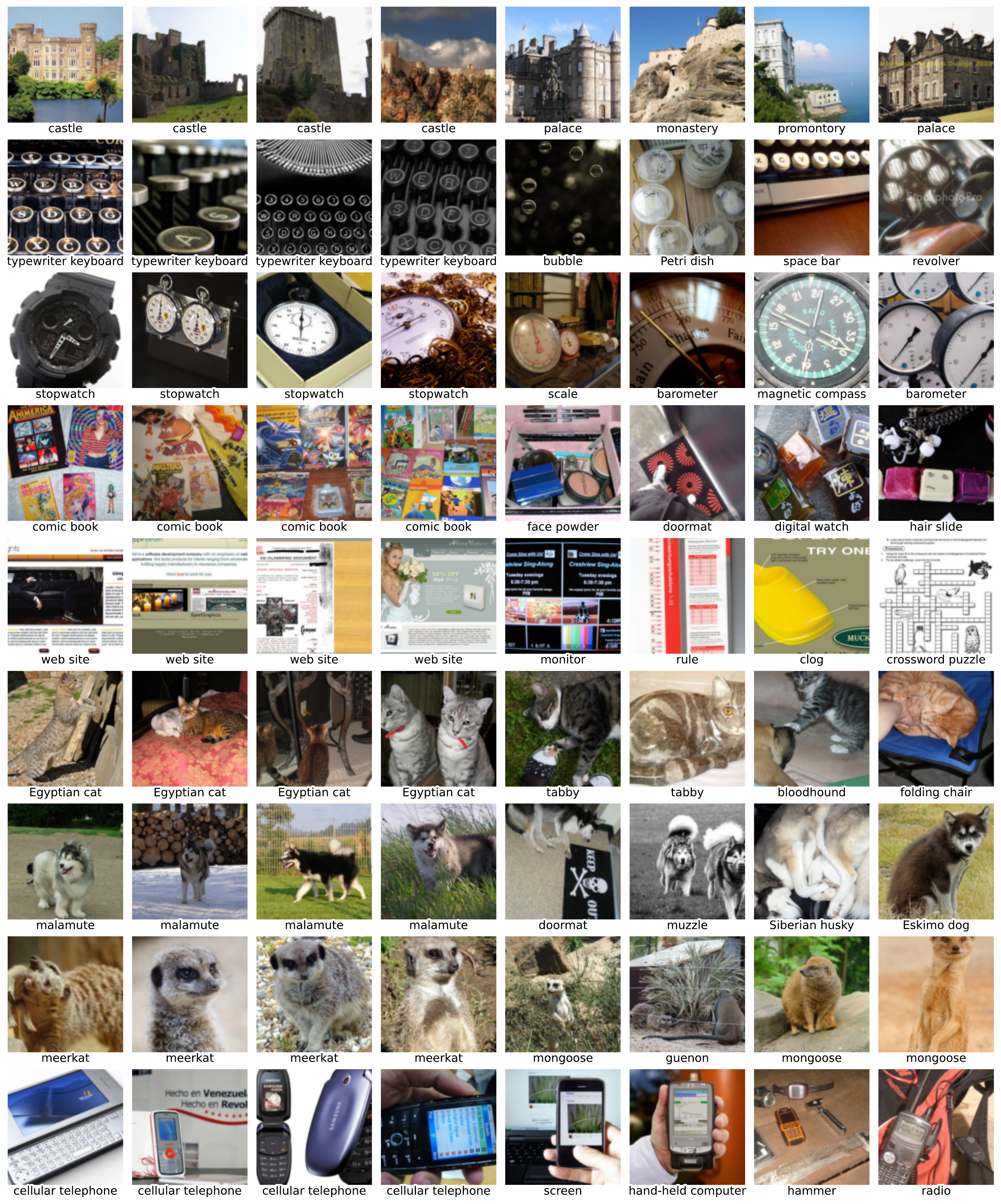}
\vspace{0.3cm}
\caption{\textbf{Randomly sampled images from the ImageNet dataset that are assigned in the same cluster using the TEMI MSN ViT-L/16 model}. The ground-truth label is indicated in the text under the image. The images in each row are assigned to the same cluster. The first four columns correspond to correctly classified images, while the last four are examples of misclassified images.}
\end{figure}
\clearpage
\begin{figure*}[h!]
\includegraphics[width=\textwidth]{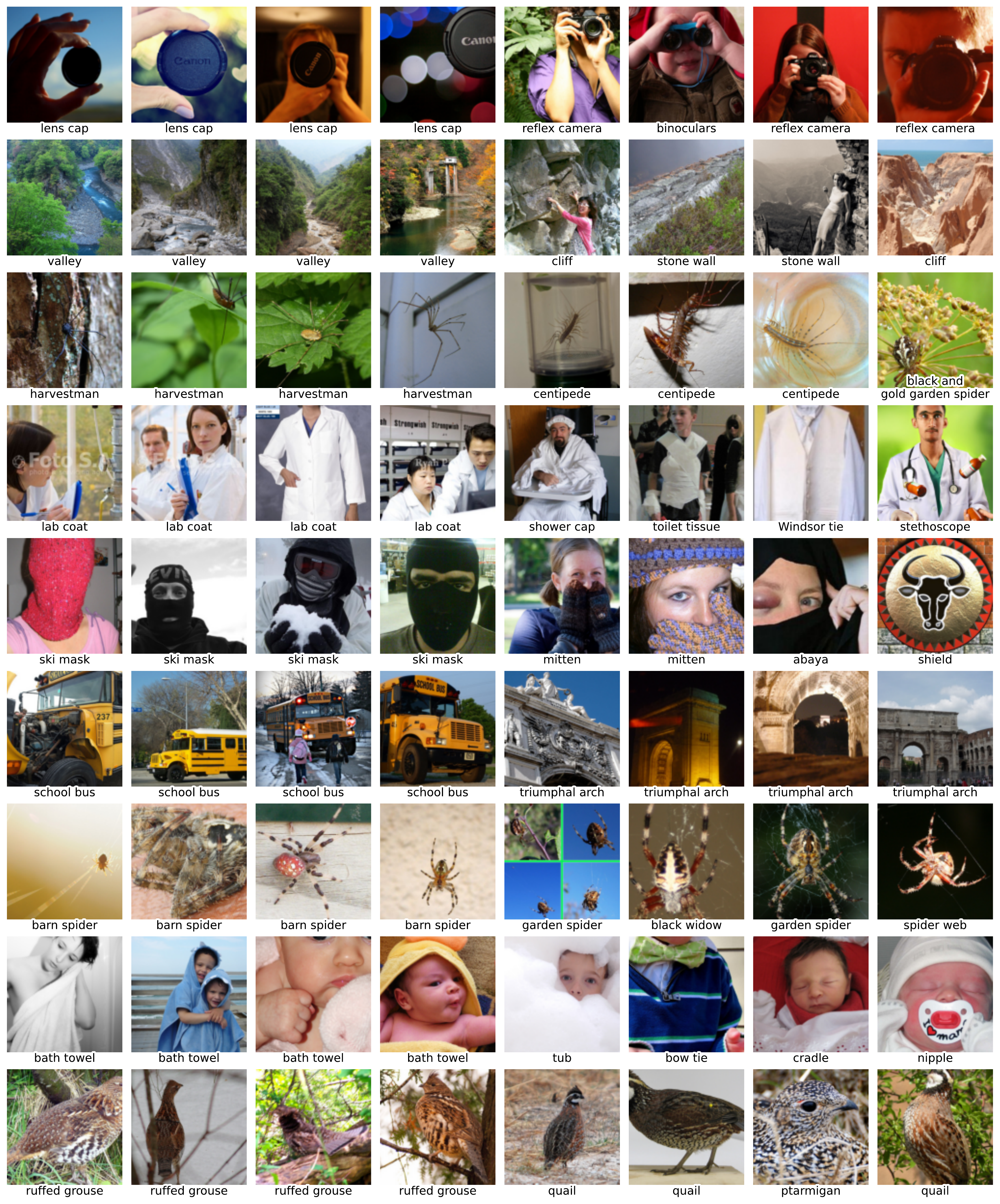}
\vspace{0.3cm}
\caption{\textbf{More randomly sampled images from the ImageNet dataset that are assigned in the same cluster.}}
\end{figure*}

\begin{table}[t]
\begin{center}
\begin{tabular}{lrrrc}
\toprule
\textbf{Dataset} & \textbf{Classes} & \textbf{Train images} & \textbf{Val images} & \textbf{Size}\\ 
\midrule
CIFAR10 & 10 & 50,000 & 10,000 & 32 $\times$ 32\\
CIFAR100 & 100 & 50,000 & 10,000 & 32 $\times$ 32\\
CIFAR20 & 20 & 50,000 & 10,000 & 32 $\times$ 32\\
STL10 & 10 & 5,000 & 8,000 & 96 $\times$ 96\\
ImageNet-50 & 50 & 64,274 & 2,500 & 224 $\times$ 224\\
ImageNet-100 & 100 & 128,545 & 5,000 & 224 $\times$ 224\\
ImageNet-200 & 200 & 256,558 & 10,000 & 224 $\times$ 224\\
ImageNet & 1000 & 1,281,167 & 50,000 & 224 $\times$ 224\\
\bottomrule
\end{tabular}
\end{center}
\caption{\textbf{An overview of the number of classes and the number of samples on the considered datasets.} The train set is used for training, while the validation split is used to compute the clustering performance metrics. The selected classes on the ImageNet \cite{deng2009imagenet} subsets (ImageNet-50, ImageNet-100, and ImageNet-200) can be found in SCAN \cite{scan}.}
\label{tab:datasets-info}
\end{table}
\begin{table}[b]
\begin{center}
\label{tab:pretrain-hp} 
    \begin{tabular}{cc}
\toprule
config & value \\
\hline
optimizer & AdamW \\
base learning rate & $10^{-4}$ \\
weight decay & $10^{-4}$ \\
optimizer momentum & $\beta_1, \beta_2{=}0.9, 0.999$ \\
batch size & 512, 1024 (ImageNet)  \\
learning rate schedule & constant \\
softmax temperature $\tau$ & 0.1 \\
$\beta$ & 0.6, 0.55 (CIFAR20) \\
cluster heads & 50 \\
warmup epochs & 20, 10 ImageNet \\
training epochs & 200, 800 (STL10) \\
teacher momentum & 0.996 \\
augmentation & None \\
\bottomrule
\end{tabular}%
\end{center}
\caption{\textbf{Hyperparameters for training the clustering heads.}}
\end{table}
\begin{table}[b]
\label{tab:finetune-hp} 
\begin{center}
\begin{tabular}{cc}
\toprule
config & value \\
\hline
optimizer & Adam \\
learning rate & $10^{-3}$ \\
weight decay & $10^{-3}$ \\
optimizer momentum & $\beta_1, \beta_2{=}0.9, 0.999$ \\
batch size & 256 \\
learning rate schedule & cosine decay \\
training epochs & 100 \\
augmentation & None \\
\bottomrule
\end{tabular}
\end{center}
\caption{\textbf{Hyperparameters for linear probing.}}
\end{table}

\begin{table}[h]
\begin{center}
\begin{tabular}{lcccc}
			\toprule
			\multicolumn{1}{c}{Datasets} & \multicolumn{2}{c}{ImageNet} &     \multicolumn{2}{c}{CIFAR100}  \\
			\cmidrule{1-2} \cmidrule{3-5}    
   \multicolumn{1}{c}{Methods} &  TEMI  & \textit{k}-means  &       TEMI  & \textit{k}-means  \\
				\hline
		\multicolumn{5}{l}{\textit{self-supervised methods}}\\		
      MAE ViT-B/16    & 9.09$\pm$0.05 & 4.93 & 7.78$\pm$0.10 & 7.11 \\
    MAE ViT-L/16    & 27.81$\pm$0.13 & 12.45 & 19.56$\pm$0.17 & 12.05 \\
    MAE ViT-H/16    & 22.34$\pm$0.11 & 10.18 & 17.64$\pm$0.19 & 11.31 \\
 \hline
     MOCOv3 ViT-S/16 & 16.73$\pm$0.19 & 12.23 & 16.58$\pm$0.16 & 13.63 \\
    MOCOv3 ViT-B/16 & 54.10$\pm$0.08 & 47.64 & 63.51$\pm$0.53 & 49.94 \\
 \hline
	DINO Resnet50   & 45.20$\pm$0.23  & 32.07 & 45.34$\pm$0.41 & 34.21  \\
	DINO ViT-S/16   & 56.84$\pm$0.25 & 51.84  & 61.69$\pm$0.75 &  50.17 \\
	DINO ViT-B/16   & 58.08$\pm$0.26 & 52.26  & \textbf{67.11$\pm$1.30} & \textbf{57.01} \\
 \hline

    MSN ViT-S/16    & 58.53$\pm$0.39 & 55.58  & 63.06$\pm$0.89 & 49.96 \\
    MSN ViT-B/16    & 60.82$\pm$0.06 & 57.56  & 65.57$\pm$1.23 & 50.60 \\
    MSN ViT-L/16    & \textbf{61.56$\pm$0.28} & \textbf{58.08}  & 61.40$\pm$0.15 & 54.08 \\
 \hline
  \hline
  \multicolumn{5}{l}{\textit{natural language supervised methods}}\\
    CLIP Resnet50   & 45.93$\pm$0.11 & 34.41 & 34.06$\pm$0.72 & 25.96  \\ 
	CLIP ViT-B/16   & 56.68$\pm$0.24 & 45.86 & 60.74$\pm$0.79 & 45.84 \\ 
	CLIP ViT-L/14   & \textbf{63.99$\pm$0.38} & \textbf{54.12} & \textbf{73.70$\pm$0.92} & \textbf{54.55} \\
 \hline
  \hline
  \multicolumn{5}{l}{\textit{supervised methods}}\\	
   Resnet50   & 72.60$\pm$0.18 & 65.69 & 49.77$\pm$0.43 & 40.28 \\
    ConvNext S      & 77.67$\pm$0.41 & 71.85 & 57.31$\pm$0.20 & 43.19\\
   ConvNext B      & 78.23$\pm$0.12 & 73.67 & 58.31$\pm$0.76 & 43.20 \\
    ConvNext L      & \textbf{79.77$\pm$0.20} & \textbf{76.98} & 59.43$\pm$0.24 & 47.94 \\
   ViT-S/16   & 64.72$\pm$0.14 & 60.32 & 60.60$\pm$0.97 & 50.65 \\ 
	 ViT-B/16   & 69.23$\pm$0.27 & 64.48 & 63.36$\pm$0.43 & 51.72 \\
	ViT-L/16   & 77.12$\pm$0.21 & 74.91 & \textbf{81.77$\pm$0.73} & \textbf{70.06} \\

			\bottomrule
		\end{tabular}%
		\label{tab:addlabel}%
			\label{table:sota-comparison}
	\newline \newline \caption{\textbf{Benchmarking various models with the introduced objective versus \textit{k}-means.} We report the clustering accuracy (ACC) in \%}
\end{center}
\end{table}
	
\end{document}